\documentclass{article}
\usepackage{microtype}
\usepackage{graphicx}
\usepackage{subfigure}
\usepackage{booktabs} 

\usepackage{svg}
\usepackage{tikz}
\usepackage{amsmath}

\usepackage{amssymb}
\usepackage{mathtools}
\usepackage{amsthm}
\usepackage{pythonhighlight}
\usepackage[toc,page]{appendix}
\newcommand{\tabincell}[2]{\begin{tabular}{@{}#1@{}}#2\end{tabular}}

\theoremstyle{plain}
\newtheorem{theorem}{Theorem}[section]

\newtheorem{lemma}[theorem]{Lemma}

\theoremstyle{definition}
\newtheorem{definition}[theorem]{Definition}

\theoremstyle{remark}

\usepackage{hyperref}

\usepackage[accepted]{icml2022}

\icmltitlerunning{What Dense Graph Do You Need for Self-Attention?}

\begin{document}

\twocolumn[
\icmltitle{What Dense Graph Do You Need for Self-Attention?   \\ }

\begin{icmlauthorlist}
\icmlauthor{Yuxin Wang}{fudan,ling}
\icmlauthor{Chu-Tak Lee}{fudan}
\icmlauthor{Qipeng Guo}{fudan}
\icmlauthor{Zhangyue Yin}{fudan}
\icmlauthor{Yunhua Zhou}{fudan} \\
\icmlauthor{Xuanjing Huang}{fudan,ling}
\icmlauthor{Xipeng Qiu}{fudan,pengcheng}
\end{icmlauthorlist}

\icmlaffiliation{fudan}{School of Computer Science, Fudan University}
\icmlaffiliation{pengcheng}{Peng Cheng Laboratory}
\icmlaffiliation{ling}{Institute of Modern Languages and Linguistics, Fudan University}


\icmlcorrespondingauthor{Yuxin Wang}{wangyuxin21@m.fudan.edu.cn}
\icmlcorrespondingauthor{Xipeng Qiu}{xpqiu@fudan.edu.cn}

\icmlkeywords{Sparse Attention, Transformer, Block sparsity, ICML}

\vskip 0.3in
]



\printAffiliationsAndNotice{}  

\begin{abstract}

Transformers have made progress in miscellaneous tasks, but suffer from quadratic computational and memory complexities. Recent works propose sparse Transformers with attention on sparse graphs to reduce complexity and remain strong performance. While effective, the crucial parts of how dense a graph needs to be to perform well are not fully explored. In this paper, we propose Normalized Information Payload (NIP), a graph scoring function measuring information transfer on graph, which provides an analysis tool for trade-offs between performance and complexity. Guided by this theoretical analysis, we present Hypercube Transformer, a sparse Transformer that models token interactions in a hypercube and shows comparable or even better results with vanilla Transformer while yielding $O(N\log N)$ complexity with sequence length $N$. Experiments on tasks requiring various sequence lengths lay validation for our graph function well\footnote{Code is available at https://github.com/yxzwang/Normalized-Information-Payload.}.

\end{abstract}

\section{Introduction}
In recent years, self-attention and its implementation Transformers~\cite{vaswani2017attention} have achieved great success in a wide variety of Natural Language Processing (NLP)~\cite{devlin2019bert,vaswani2017attention,miller2019leveraging,sun2019utilizing} and Computer Vision (CV)~\cite{yuan2021tokenstotoken,dosovitskiy2021image} tasks. 

The key innovation of self-attention mechanism is to allow each token to interact with others directly, and thus avoid the long-term dependency problem. However, this results in the quadratic computational and memory complexity with the sequence length. To improve model efficiency, many lightweight Transformers are proposed ~\cite{tay2020efficient,lin2021survey}. Among them, 
sparse Transformers~\cite{zaheer2021big,beltagy2020longformer,child2019generating,guo2019starTransformer} utilize sparse attention in the self-attention mechanism including global attention, window attention or rule-based sparse attention.

\begin{figure}[t]
\centering
\includegraphics[scale=0.2]{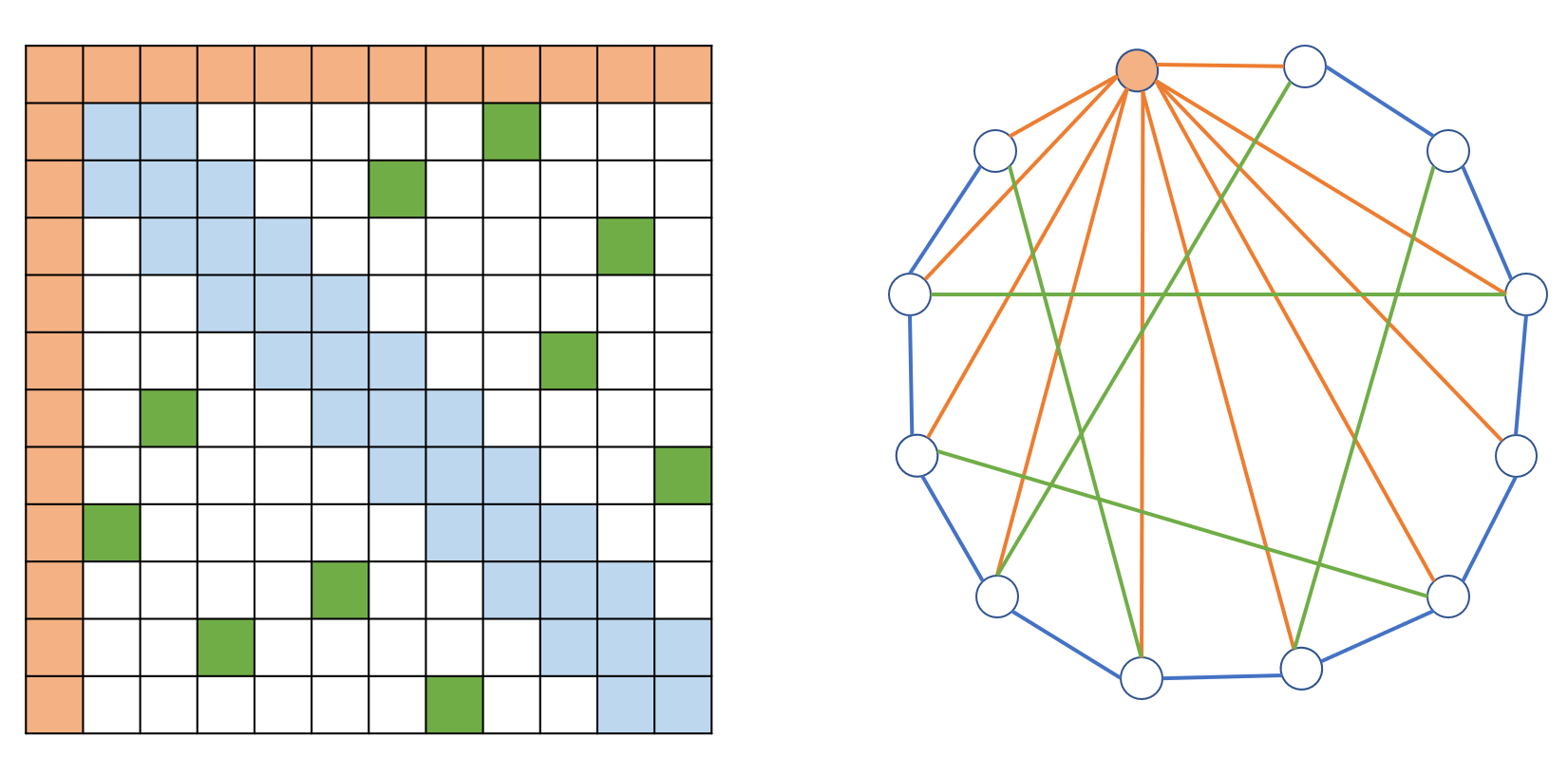}

\caption{Attention map and its corresponding graph.}
\label{fig:sparse_attn}

\end{figure}

Previous works view sparse attention as self-attention on sparse graphs. Figure \ref{fig:sparse_attn} shows typical sparse attention map and its corresponding graph. Vanilla self-attention can be regarded as a complete graph. Although these Sparse Transformers have made progress, there still remains some questions: which property is important for those graphs serving as ground for self-attention? How dense do we need the graph to be in order to reduce complexity and at the same time remain performance? While some~\cite{zaheer2021big,chen2021pixelated} showed theoretical analysis for existing sparse self-attention, they do not provide a general analysis tool for comparing different sparse patterns.

 To investigate further into differences between sparse patterns, we need theoretical analysis based on sparse graphs. In this paper, we propose Normalized Information Payload (NIP), a graph scoring function to measure information transfer on a given graph. Our function can also be applied to analyze previous sparse patterns and provide insights for their empirical results. Guided with our proposed function, we further present Hypercube Transformer which adapts hypercube into self-attention and achieves great balance between performance and computation costs. Experiments in long-context sequence modeling and large-corpus pretraining show that Hypercube Transformer is competitive both theoretically and practically.

The contributions of our paper can be summarized as follows:

\begin{itemize}
\item We propose Normalized Information Payload (NIP), a graph scoring function that tries to investigate important properties of graph used in self-attention. This function provides theoretical analysis tools for performance and complexity of different graphs used in position-based sparse self-attention.

\item Guided with our theoretical analysis, we present Hypercube Transformer, which can behave competitively compared to vanilla Transformer in various tasks and better in long context tasks while requiring less time and computation.

\end{itemize}

\section{Graph Scoring Function for Balancing Costs and Performance }

\textbf{What Dense Graph Do We Need?} Self-attention (vanilla and sparse) can be viewed as attention-based information transfer on graphs. While many sparse patterns based on sparse graphs have been presented, there still remains questions: what is important for a sparse graph in self-attention? What dense graph do we need for self-attention? How dense is optimal for the graph to balance cost and performance? To answer these questions, we propose Normalized Information Payload, a graph scoring function to score any graphs used for self-attention.

\subsection{Normalized Information Payload}

\begin{table*}[!h]
\centering
\small
\caption{\footnotesize Normalized Information Payload for commonly used graphs, where $w$ is the number of neighbors in ring lattice. $\star$: $\Theta\left(\frac{1}{N^2}\right)$ after refinement.}
\vskip 0.15in
\begin{tabular}{@{}lccccccccc@{}}
\toprule
Type of graph & $\text{CC}(G)\downarrow$ & $\text{\text{IP}}(G)\uparrow$                           & $\text{NIP}(G)\uparrow$                                       \\
\midrule
Complete      & $\Theta(N)$              & $\Theta\left(\frac{1}{N}\right)$                                    & $\Theta\left(\frac{1}{N^2}\right)$                                              \\
E-R random    &$\Theta(\log^2 N)$      & $\Theta\left(\frac{(N-2)!}{N^{\log N}(N-\log N)!}\right)$ & $\Theta\left(\frac{(N-2)!/(N-\log N)!}{N^{\log N}\log^2(N)}\right)$   \\
Tree          & $\Theta(\log N)$        & $\Theta(\frac{1}{N^{\log(9)}})$                   & $\Theta\left(\frac{1}{N^{\log(9)}\log N}\right)$                       \\
Star          & $\Theta(1)$              & $\Theta\left(\frac{1}{N}\right)$                                    & $\Theta\left(\frac{1}{N}\right)^{\star}$                                               \\
\midrule

\tabincell{l}{Ring lattice\\ \quad + E-R random }   & $\Theta(\log N(\log N+w))$        & $\Theta\left(\frac{(N-2)!}{(N+\frac{w}{\log N})^{\log N}(N-\log N)!}\right)$                   &  $\Theta\left(\frac{(N-2)!/(N-\log N)!}{(N+\frac{w}{\log N})^{\log N}\log N(\log N+w)}\right)$                       \\

\tabincell{l}{Ring lattice\\
\quad + Star (Longformer) }         & $\Theta(w)$              & $\Theta\left(\frac{1}{Nw}\right)$                                     & $\Theta\left(\frac{1}{Nw^{2}}\right)$                                            \\
\tabincell{l}{Ring lattice\\
\quad + Star\\
\quad + E-R random (BigBird)}         & $\Theta(\log N+w)$              & $\Theta\left(\frac{1}{N(\log N+w)}\right)$                                 & $\Theta\left(\frac{1}{N(\log N+w)^2}\right)$                                               \\
\midrule
Hypercube     &$\Theta(\log^2 N)$     & $\Theta\left(\frac{(\log N)!}{(\log N)^{\log N}}\right)$         & $\Theta\left(\frac{(\log N)!}{(\log N)^{\log N+2}}\right)$           \\
\bottomrule
\end{tabular}
\label{tab:Gscore}
\end{table*}
Generally, we expect our model to grab all interactions among tokens. Sparse attentions indirectly capture these interactions by multi-layer information transfer. That comes to two questions: what costs do we pay for grabbing these interactions and how much information can be transferred? To answer these questions respectively, we consider graphs in two aspects: Computational Complexity and Information Payload. 

\textbf{Computational Complexity.} Computational Complexity ($\text{CC}$) of a graph $G$, denoted by $\text{CC}(G)$, is the computation complexity required to allow the model to grab all interactions among tokens when using graph $G$ for self-attention. The requirement for graph $G$ used here is that $G$ is connected. Lower Computational Complexity of a graph makes the whole model less computational expensive. 

\textbf{Information Payload.} The amount of information transfer is also important. For instance, sequence models like LSTMs~\cite{HochSchm97} is able to transfer information in a long sequence but suffers from long-term dependency and low information capacity. So we introduce Information Payload ($\text{IP}$) for a graph $G$, denoted by $\text{IP}(G)$, measuring how much information a graph can transfer when allowing the model to grab all interactions among tokens. Higher Information Payload of a graph enables the whole model to grab more information.

To better compare information transfer on different graphs, we define the Normalized Information Payload for a graph $G$, denoted by $\text{NIP}(G)$, as follows,
\begin{equation}
    \text{NIP}(G):=\frac{\text{IP}(G)}{\text{CC}(G)}. 
\end{equation}
The higher $\text{IP}(G)$ is, the more information a graph can transfer. The lower $\text{CC}(G)$ is, the less computational resources the graph costs, which also means that the graph can be used to model long sequences. And the higher score of $\text{NIP}(G)$ implies the graph can perform well in real-world tasks. This is consistent with our observations in experiments. Then we will demonstrate how these two components of our function are defined in detail and show how previous works like BigBird fit our function. For a self-attention layer on graph $G$, we denote it by $G$-attention layer briefly.

\subsubsection{Computational Complexity }
Computational Complexity is related to grabbing all interactions among tokens. Given a $G$-attention layer, to make the whole model grab all interactions among tokens, we need to stack $ \kappa(G)$ $G$-attention layers. Straightforwardly, $\kappa(G)$ is the diameter of graph $G$. And for one $G$-attention layer, the Computational Complexity is proportional to the total number of edges in $G$. When the input sequence is fixed at length $N$, the Computational Complexity for one layer is proportional to the mean degree of $G$, which we denoted by $\rho(G)$ here. Intuitively $\rho(G)$ measures the complexity of the graph itself and $\kappa(G)$ is how many times we forward the graph. 

\begin{definition}
Let Computational Complexity $\text{CC}$ for a given graph $G$ be 
\begin{equation}
    \text{CC}(G):= \rho(G) \times \kappa(G).
\end{equation}
\end{definition}

$\text{CC}(G)$ shows the minimum computation costs to ensure information exchange for every node pair in a graph.

\begin{figure}

  \centering 
  \includegraphics[scale=0.32]{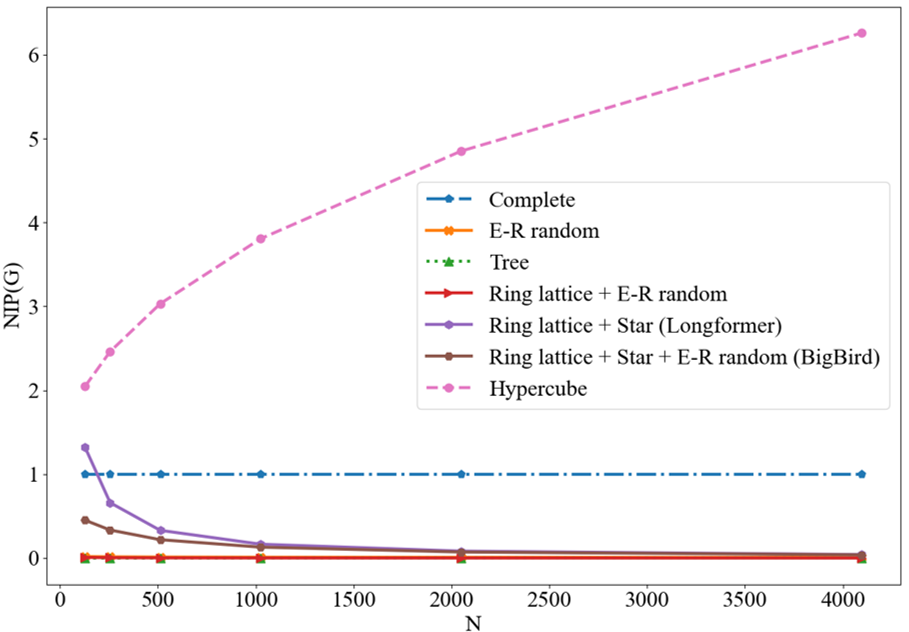} %
  \caption{$\text{NIP}(G)$ for graphs divided by complete graph in Table \ref{tab:Gscore}. We do not include star graph and ring lattice in this Figure because $\text{NIP}(G)$ for star graph is too large. The $w$ used for ring lattice is set to $\frac{N}{16}$ according to Longformer at length $4096$.}\label{fig:IT}
\end{figure}
\subsubsection{Information Payload}
To capture all interactions among tokens is not enough, we have to take into account how much information a graph can transfer after that. Information Payload is introduced to measure it. First we examine how information transfer is like in self-attention. Since we have Softmax operation in self-attention mechanism acting as normalization, it is straightforward to set the total amount of information a node can receive as one unit of information. For one node $i$ with degree $deg(i)$, the average information it can receive is $\frac{1}{deg(i)}$, so the total Information Payload for one path is related to all end nodes on the path.

\textbf{Notation.}
Given a graph $G(\mathcal{V,E})$, and nodes $a, b$, let $\mathcal{P}_{ab} $ be the set of all paths that start with node $a$ and end with node $b$ and have length equal to the distance between node $a$ and node $b$. For one path $P_{ab} \in  \mathcal{P}_{ab} $, $len(P_{ab})$ is the length of $P_{ab}$. For one node $i$, we use $deg(i)$ to denote the degree of it.

\begin{definition}
\label{def:setofpaths}
For one path $P_{ab} \in \mathcal{P}_{ab} $, the Information Payload of one path $P_{ab}$, denoted by $R(P_{ab})$ , is defined as 
\begin{equation}
    R(P_{ab}):=\prod_{v \in P_{ab} \And v \neq a}\frac{1}{deg(v)}.
\end{equation}

\end{definition} 
Next we define the Information Payload between node $a$ and node $b$.
\begin{definition}
The Information Payload between node pair $(a,b)$, denoted by $I_{ab}$ is sum of Information Payload of all paths that belong to $\mathcal{P}_{ab} $ :

\begin{equation}
    I_{ab}=\sum_{P_{ab} \in  \mathcal{P}_{ab} }R(P_{ab}).
\end{equation}
\end{definition}

Note that $\frac{1}{k_i}$ is equal to the probability that a random walk starts from the node itself to any of its neighbors, which makes our Information Payload closely related to random walk on graphs.

\textbf{Relationship between Information Payload and Random Walk.} Our Information Payload is deduced from self-attention forward, which is closely related to random walk. We show in Figure \ref{pic:attention_randomwalk} that self-attention is like reversed random walk. In Figure \ref{pic:attention_randomwalk}, each column shows a $G$-attention layer and we have $t$ layers. For $G$ used here, we present three nodes $a,b,c$ and edges $(a,b)$, $(a,c)$ and self-loop of three nodes. Figure \ref{pic:attention_randomwalk}(a) shows the updating of $G$-attention layers. The red paths show the information transferred from node $a$ to node $c$. In Figure \ref{pic:attention_randomwalk}(b), blue paths show the random walk starts from node $c$ to node $a$. We can see that the Information Payload from node $a$ to node $c$ after $t$ layers is equal to the probability of a random walk starting from node $c$ ends in node $a$ at the $t$th step. We demonstrate it in the theorem below in detail.

\begin{figure}[h]

\centering
\subfigure[Attention forward]{
\includegraphics[scale=0.2]{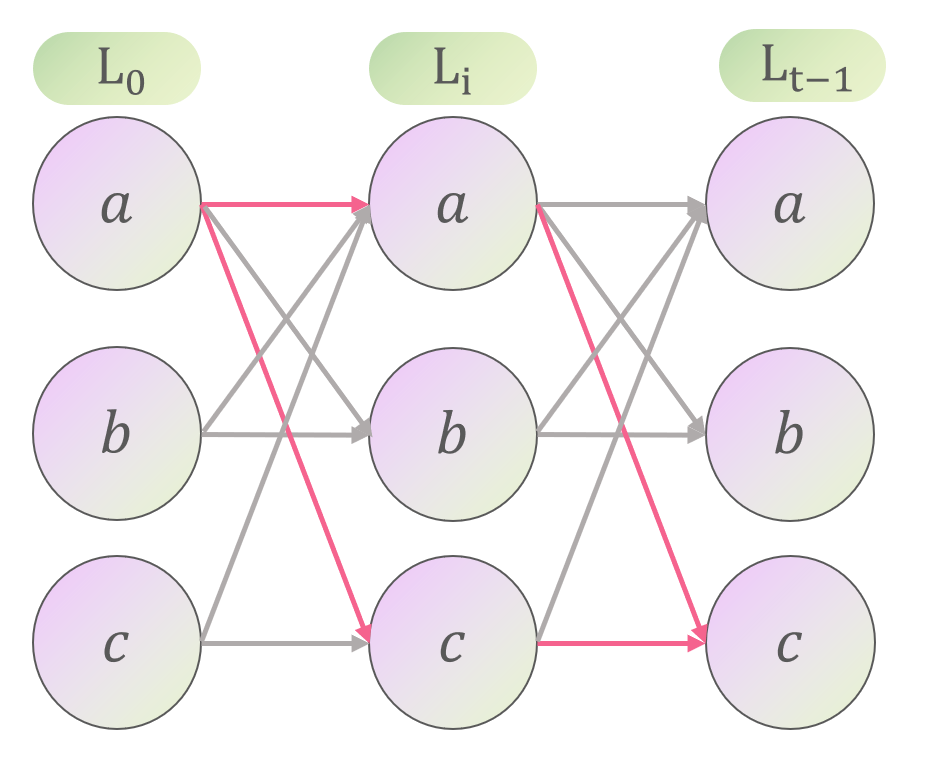}
}\quad
\quad
\subfigure[Random walk]{
\includegraphics[scale=0.2]{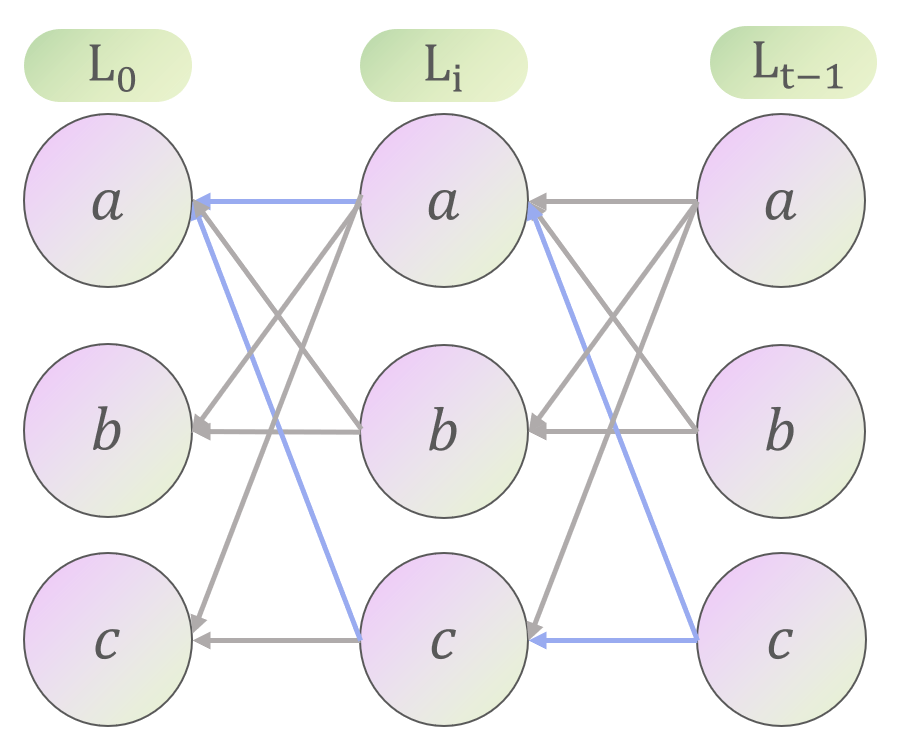}
}\quad

\caption{Relationship between $G$-attention layer and random walk. Red lines in Figure (a) shows attention forward from node $a$ to node $c$ across $t$ layers while blue lines in Figure (b) shows random walk starting from node $c$ to node $a$.}\label{pic:attention_randomwalk}

\end{figure}

\begin{theorem}
\label{Iab}
Information Payload between two nodes $I_{ab}$ equals to the probability of a random walk starts from node $b$ that ends in node $a$ at step $len(P_{ab})$.
\end{theorem}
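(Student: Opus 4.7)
The plan is to identify, path by path, the quantity $R(P_{ab})$ with the probability that a specific simple random walk from $b$ traverses the reversal of $P_{ab}$; summing these over all shortest paths will then recover the probability that a length-$t$ random walk from $b$ lands at $a$, where $t := len(P_{ab}) = d(a,b)$.

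First I would match terms. Writing $P_{ab}=(a=u_0,u_1,\ldots,u_t=b)\in\mathcal{P}_{ab}$, Definition~\ref{def:setofpaths} gives $R(P_{ab})=\prod_{j=1}^{t} 1/deg(u_j)$. The reversal $W_{ba}=(b=u_t,u_{t-1},\ldots,u_0=a)$ is a legal realization of a simple random walk, which at vertex $v$ moves to a uniformly chosen neighbor with probability $1/deg(v)$; its probability is $\prod_{i=0}^{t-1} 1/deg(u_{t-i}) = \prod_{j=1}^{t} 1/deg(u_j)$, coinciding with $R(P_{ab})$.

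Second I would verify that reversal is a bijection from $\mathcal{P}_{ab}$ onto the set of length-$t$ walks from $b$ ending at $a$. Injectivity is trivial. For surjectivity I would invoke the triangle inequality on graph distance: for any walk $(b=v_0,v_1,\ldots,v_t=a)$, the prefix at step $k$ satisfies $d(b,v_k)\le k$, while completing it to $a$ needs at least $d(v_k,a)\ge t-k$ further steps; reaching $a$ in exactly $t=d(a,b)$ steps therefore forces $d(b,v_k)=k$ at every intermediate vertex. This equality rules out revisits and backtracking, so the walk is a geodesic and its reversal lies in $\mathcal{P}_{ab}$. Summing the matched identity over the bijection then gives $I_{ab}=\sum_{P_{ab}} R(P_{ab})=\sum_{W_{ba}} \Pr[W_{ba}]=\Pr[\text{walk from }b\text{ is at }a\text{ at step }t]$, as claimed.

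I expect the geodesic argument of the second step to be the main obstacle, because it is the only place where the hypothesis that $t$ equals the distance (rather than an arbitrary walk length) is actively used, and it is precisely what guarantees that the random-walk side of the bijection is not inflated by non-shortest walks of the same length. Once that observation is pinned down, the remainder reduces to bookkeeping with the product defining $R(P_{ab})$ and a reindexing of factors under path reversal.
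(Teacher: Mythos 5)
Your proof is correct and takes essentially the same approach as the paper's: a path-by-path identification of $R(P_{ab})$ with the probability of the reversed walk (the paper's Lemma showing $R(P_{ab})=Pr(P_{ba})$), followed by summing over all shortest paths. Your triangle-inequality argument that any walk from $b$ to $a$ of length exactly $len(P_{ab})$ must be a reversed geodesic simply makes explicit a surjectivity point the paper's proof leaves implicit; it does not constitute a different route.
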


Proof of theorem \ref{Iab} and calculating $I_{ab}$ via random walk is in Appendix \ref{proof:Iab}.

Note that the probability of a random walk starts from $b$ that ends in $a$ within $len(P_{ab})$ steps equals to zero. Thus $I_{ab}$ measures the amount of information when information flow first reaches node $a$ from node $b$. 

Since we have defined the Information Payload between two nodes, the Information Payload for the whole graph is chosen to the Information Payload between node pairs $(a,b)$ whose distance is the diameter of the graph.

\begin{definition}
\label{def:FG}
The Information Payload for a graph $G$ $\text{IP}(G)$ is the smallest Information Payload $I_{ab}$ between node pairs $(a,b)$ whose distance is the diameter of the graph. Let $\Delta$ be the set of node pairs whose distance is the diameter of the graph, we have 
\begin{equation}
    \text{IP}(G):=\min_{(a,b) \in \Delta}{I_{ab}}.
\end{equation}
We consider the minimum Information Payload for node pair $(a,b)$ whose distance is the diameter, so we choose the smallest value to guarantee the lower bound of Information Payload, which is motivated by the theory of the Cannikin Law~\cite{e21090863}, that the capacity of the wooden bucket is limited by the height of its shortest plank.
\end{definition}

\subsubsection{Discussion}

Table \ref{tab:Gscore} lists $\text{CC}(G)$, $\text{IP}(G)$ and $\text{NIP}(G)$ for commonly used graphs\footnote{In this paper, all $\log$ means $\log_2$.}. The detailed computation is in Appendix \ref{appendix:computation}. 

\textbf{Self-loop.} In self-attention, a token can always attend to itself, meaning that all graphs have self-loop. $\text{IP}(G)$ use node pair whose distance is the diameter of the graph, so self-loop does not affect its value much. Also $\text{CC}(G)$ does not change much even if we have a large number of nodes. Thus $\text{NIP}(G)$ does not change much compared to graphs without self-loop. 

\textbf{Markov Chains.} In self-attention, the information updating for a node is not uniform with its neighbors but dependent on representations of all relative nodes, which is not the case of normal random walk. However, previous analysis~\cite{https://doi.org/10.48550/arxiv.1906.04341} on BERT attention shows that some attention heads, especially in lower layers, have very \textbf{broad attention}. Since information in lower layers is closer to the input and is important, we set the uniform distribution of attention weights and this setting is also the situation of random initialization. Also, viewing random walk as Markov chains can bring insights into this situation. If we let the transition matrix in Markov chains be attention-based, then normal random walk becomes attention-based random walk, which is suitable for self-attention. 

\subsection{Case study}

To make $\text{NIP}(G)$ clear, we show in Figure \ref{fig:IT} all the $\text{NIP}(G)$ in Table \ref{tab:Gscore} except star graph. Because $\text{NIP}(G)$ for star graph is too large.

\begin{figure*}
  \centering 
  \includegraphics[scale=0.45]{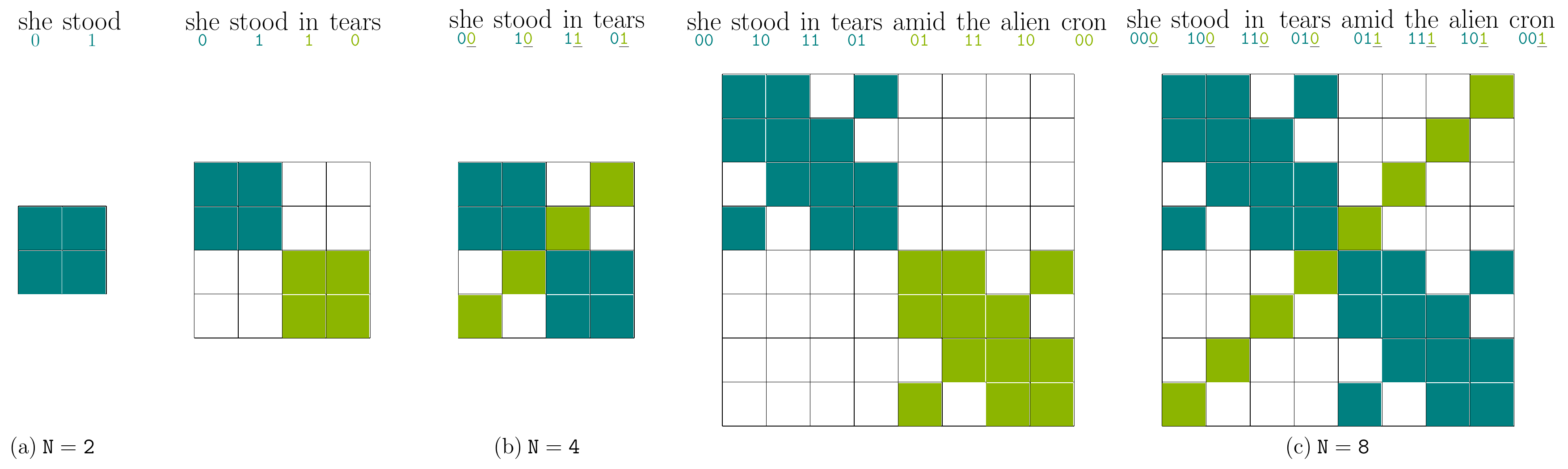} %
  \caption{Iteratively mapping a sequence to a hypercube and its attention mask. Figure (a), (b) and (c) is the attention map for input sequences with length $N=2,4,8$ respectively.}
  \label{fig:mapping}
\end{figure*}

\textbf{Star graph.}

Star graph has the highest $\text{NIP}(G)$, however, huge amounts of information flow through the global node can cause a bottleneck of information transfer. This bottleneck reduces the Information Payload of star graph to $\frac{1}{N-1}$ of the original one because $N-1$ local nodes versus one global node. Thus, the refined $\text{NIP}(G)$ of star graph is $\Theta\left(\frac{1}{N^2}\right)$.

\textbf{Ring lattice.}
 Ring lattice is often used in self-attention known as window attention or local attention. It is usually combined with other attention patterns so we compute NIP for its mixtures. Most of attention occupies in neighborhoods~\cite{cui-etal-2019-fine}, which makes previous sparse patterns adapting it reasonable. 

\textbf{Random graphs.}
We compute the expected values for random graphs in this paper. For the Erdős–Rényi (E-R) random graph used in BigBird, if the probability of every edge to exist is $p$, the E-R random graph is highly possibly connected if $p>\frac{(1+\epsilon)\ln(N)}{N}$. This means that to make connected random graphs with high probability, the average degree of the graph is more than $\frac{(1+\epsilon)\ln(N)(N-1)}{2N}$. We use the lower bound and set $p$ to $\Theta\left(\frac{\log N}{N}\right)$ here. We limit the choice of random graphs in this paper to E-R random graph because it has been applied in self-attention.

\textbf{BigBird, Longformer and mixed graphs.}
BigBird is mixed from star graph, ring lattice and random graph, while Longformer is only mixed from star graph and ring lattice. For BigBird, because of random graph, we compute the expected $\text{NIP}(G)$ for a given graph $G$. We use the settings for random graph as we mentioned in above paragraph. We have shown that adding random graph to Longformer like BigBird not necessarily improves the Normalized Information Payload for the graph because it makes the graph more complex and reduces the expectation Information Payload. While such observation is counterintuitive, the BigBird-ETC which is sota in BigBird models also despose random attention. Our ablation studies in the experiment section (in Table \ref{tab:lra}) also demonstrate that mixing Longformer with random attention does not necessarily improve the performance.

\section{Hypercube Transformer}
Guided with our Normalized Information Payload, we aim to find better graphs for self-attention and present Hypercube Transformer. Like vanilla Transformer, our Hypercube Transformer utilize positional embeddings to catch positional information.

\subsection{Hypercube}

\begin{figure}
  \centering 
  \includegraphics[scale=0.4]{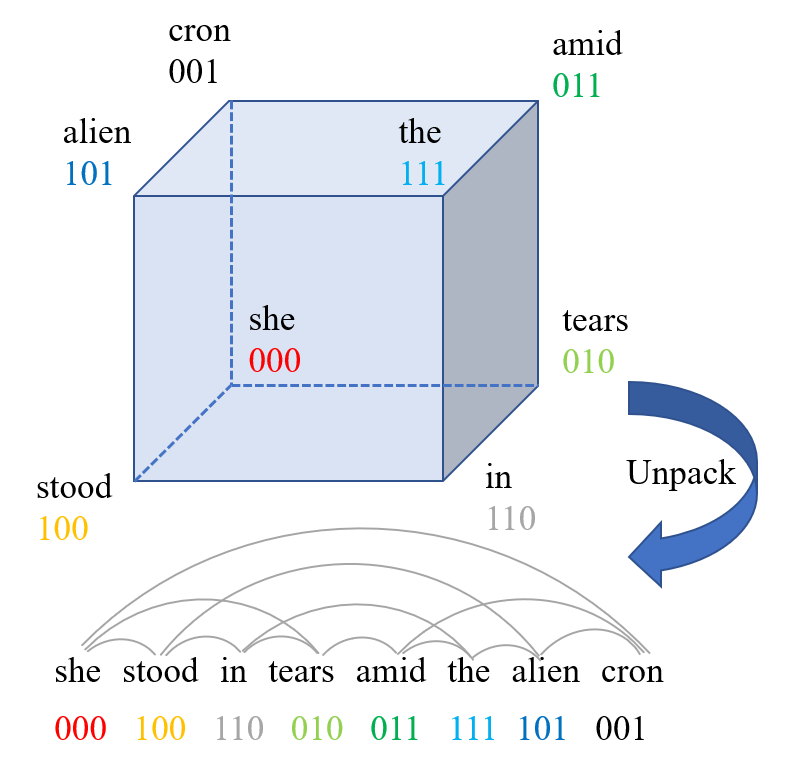} %
  \caption{Unpacking a hypercube to a sequence. Tokens that are neighbors in hypercube are also neighbors in a sequence.}
  \label{fig:unpacking}
\end{figure}

The global node in star graph is very important but suffers from information interference as mentioned before. Hence we search for regular graphs with no inductive bias to alleviate information interference. An observation for complete graph is that the shortest distance between two neighbors' neighbors (excluding two nodes themselves) equals to zero, meaning that every two neighbor has the same neighbor. We propose a graph in which this distance equals to one, which makes the whole graph much sparser while maintaining the connectivity of the graph, and that graph is hypercube.


After our computation, hypercube is potential according to our Normalized Information Payload. It has Normalized Information Payload larger than other sparse graphs except star graph as shown in Figure \ref{fig:IT}. At the same time, small $\text{CC}(G)$ means hypercube can be applied to long sequences. Also, in parallel computing, hypercube is one of the most useful communication structures which encounters not much communication interference.

\subsection{Mapping Sequences to Hypercube\protect\footnote{We thank Anjiang Wei at Stanford University for his help.}}

Although we show that hypercube is potential in Normalized Information Payload, it's hard to apply it to real-world tasks if we can't map sequences to hypercube. Generally, we suggest a mapping method have two good properties:maintaining the original neighborhoods in sequences and easy to be extended to longer sequences. One example of unpacking hypercube to sequence is shown in Figure \ref{fig:unpacking}. Here we propose a novel iterative binary-number-based mapping from sequence to hypercube. We show this iterative binary mapping algorithm in Appendix \ref{alg:mapping}. We also show the iterative mapping pipeline and its attention map in Figure \ref{fig:mapping}.

  If the dimension of binary numbers is $k$, the final binary representation for token with index $i$ ($X_i:=X_i^{k-1}X_i^{k-2}...X_i^{1}X_i^{0}$, $i \in [0,N-1]$) is given by 
  \begin{equation}
      X_i^d = \left(\lfloor \frac{i \mod 2^{k-d+1}} {2^{k-d}}\rfloor + \lfloor \frac{i \mod 2^{k-d}}{2^{k-d-1}}\rfloor\right)\mod 2,
  \end{equation}
  where $\lfloor x\rfloor$ means the largest integer smaller than $x$. We denote the set of representation that has only one digit different from $X_i$ by $\mathcal{N}(X_i)$. After we got $X_i$ for every token with index $i$, we can easily draw the attention map by the rule that one token with representation $X_i$ can only attend to tokens with representation $X_j \in \mathcal{N}(X_i)$.

\begin{table*}[ht]
\centering
\small
\caption{Performances for different graphs on Long-Range Arena. $\star$ means after refinement.}
\vskip 0.15in
\setlength{\tabcolsep}{.8mm}
\begin{tabular}{@{}lccccccccccc@{}}
\toprule
Graph                             & \bf ListOps  & \bf Text  & \bf Retrieval  & \bf Image  & \bf Pathfinder   & \bf Avg. & \bf $\text{NIP}(G)$ & \bf SpeedUp     \\
\#Length                           & 2K           & 4K        & 4K             & 1K         & 1K               &    &       &     \\
\midrule
Complete                     &  37.20            & 63.54         & 81.00              & 47.23         & \textbf{74.39}                & 60.67    & $1\times$  &  $1\times$    \\
Star                             & 37.58         & 63.37     & 79.71          & 52.19     & 66.92                & 59.95            & $1\times^{\star}$ & -  \\
\tabincell{l}{Ring lattice + E-R random }                    & 36.44           & \textbf{63.81}         & 80.17              & 50.88          & 67.14                & 59.69         &  $1.43e^{-10}\times$  &- \\
\tabincell{l}{Ring lattice + Star (Longformer)  }      & 37.55           & 61.12     & 80.53         & 52.13      & 68.66                & 60.00          &   $8.25e^{-2}\times$ &- \\
\tabincell{l}{Ring lattice + Star + E-R random  (BigBird)}  & \textbf{37.80}        & 62.34     & 79.49          & 52.87      & 67.44            & 59.99        &  $7.21e^{-2}\times$  & - \\
\midrule
\textbf{Hypercube}                 & 37.48         & 63.79     & \textbf{81.16}          & \textbf{53.79}      & 74.12            & \textbf{62.07}& $4.85\times$ &  $15.8\times$\\
\bottomrule
\end{tabular}
\label{tab:lra}
\end{table*}

\subsection{Block Sparsity}
Block sparse pattern is introduced by~\cite{gray2017gpu} to tackle hardware problems for efficient calculating. It splits the sequence into several blocks and impose sparse patterns on blocks instead of tokens. Tokens can attend to every token in the same block and the block its block can attend to. Adapting block sparse lowers the sparsity of graph and intensifies information passing when the number of $G$-attention layer is fixed. While all sparse patterns adapt block sparse, we don't compare those patterns after adapting block sparse. However, we consider how block sparse have effects on sparse patterns and we have the theorem below.
\begin{theorem}
\label{blocksparse}
For block size $b\leq \frac{N}{2}$, larger block size makes star graph and hypercube have less Normalized Information Payload. 
\end{theorem}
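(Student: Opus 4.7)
The plan is to derive explicit formulas for $CC(G_b)$, $IP(G_b)$, and thus $NIP(G_b)$ for the block-sparse versions of the star and the hypercube, and then to show that each $NIP(G_b)$ is strictly decreasing in $b$ on $1 \leq b \leq N/2$. First I would fix the block-sparse construction once: partition the $N$ tokens into $N/b$ blocks of size $b$, apply the underlying sparse pattern to the super-nodes, and declare two tokens adjacent iff their blocks coincide or are adjacent in the super-pattern.

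For the block-sparse star, the super-star on $N/b$ nodes still has diameter $\kappa = 2$; each local-block token has degree $2b-1$ and each global-block token has degree $N-1$, so $\rho(G_b) \sim 3b$ and $CC(G_b) = \Theta(b)$ is strictly increasing in $b$. Between any two tokens in distinct leaf blocks, there are exactly $b$ length-two paths, one through each global-block token, each contributing $R = 1/((N-1)(2b-1))$, giving $IP(G_b) = b/((N-1)(2b-1))$. Since $\partial_b(b/(2b-1)) = -(2b-1)^{-2} < 0$, $IP$ strictly decreases in $b$, and so $NIP = IP/CC$ strictly decreases; this case reduces to a one-line calculus check.

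For the hypercube, set $k = \log(N/b)$. By symmetry every token has degree $b(k+1) - 1$ (the $b-1$ other members of its block, plus $b$ tokens in each of the $k$ super-neighbor blocks), and the token diameter equals the super-diameter $k$, so $CC(G_b) = k(b(k+1)-1)$. For antipodal tokens $(a,z)$ at super-distance $k$, each shortest path corresponds to one of the $k!$ orderings of super-bit flips together with a free choice among the $b$ tokens at each of the $k-1$ intermediate super-nodes, and each such path contributes $R = (b(k+1)-1)^{-k}$. Combining yields
\begin{equation*}
NIP(G_b) = \frac{k!\, b^{k-1}}{k\,(b(k+1)-1)^{k+1}}.
\end{equation*}
I would then pass to $\log NIP$ as a function of $b$, use $dk/db = -1/(b\ln 2)$, and apply Stirling's approximation $\log(k!) = k \log k - k + O(\log k)$ to extract the dominant term, reducing the claim to a one-variable inequality in $b$ on the target range.

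The main obstacle is the hypercube case: $k$ and $b$ move in opposite directions as $b$ grows, so the joint behaviour of $k!$, $b^{k-1}$, and $(b(k+1)-1)^{k+1}$ is not obviously monotonic. I would resolve this by cancelling the explicit $b$ in the denominator against one power of $b^{k-1}$ to symmetrise the exponents, then using the bounds $(k+1)^{k+1} \leq e \cdot k \cdot k^k$ and $k! \leq k \cdot (k-1)!$ to reduce the ratio $NIP(G_{2b})/NIP(G_b)$ to a product of factors each of which is at most $1$ whenever $\log(N/b) \geq 1$. Sanity checks at the endpoints $b=1$ (matching the hypercube row of Table \ref{tab:Gscore}) and $b=N/2$ (a two-block graph with $k=1$) would serve as consistency tests on the formula.
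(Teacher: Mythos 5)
Your token-level formula for the block-sparse hypercube, $\mathrm{NIP}(G_b)=\frac{k!\,b^{k-1}}{k\,(b(k+1)-1)^{k+1}}$ with $k=\log(N/b)$, is a faithful application of the paper's definitions to the literal block graph, but it defeats your own plan: under this formula the monotonicity you need is \emph{false} on part of the range $b\le N/2$, so the final step (showing $\mathrm{NIP}(G_{2b})/\mathrm{NIP}(G_b)\le 1$ whenever $\log(N/b)\ge 1$) cannot be carried out. The culprit is exactly the path-multiplicity factor $b^{k-1}$ you correctly counted. Setting $b=N/2^{j}$ with $j$ fixed gives $\mathrm{NIP}(G_b)\approx \frac{j!\,4^{j}}{j\,(j+1)^{j+1}}\,N^{-2}$, and this coefficient is about $0.49,\,0.50,\,0.59,\,1$ for $j=4,3,2,1$: the score \emph{increases} as $b$ grows from $N/16$ to $N/2$. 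Numerically, for $N=256$ your formula gives roughly $7.9\times 10^{-6}$, $9.2\times 10^{-6}$, $1.5\times 10^{-5}$ at $b=32,64,128$. Indeed at $b=N/2$ the block-sparse hypercube (and star) degenerates to the complete graph with $\mathrm{NIP}\approx 1/N^{2}$, which exceeds the value at $b=N/4$; the sanity check you planned at $k=1$ would have exposed this. (Your star argument has the same boundary issue at $b=N/2$, where the diameter collapses to $1$ and the ``$b$ paths through the hub, $\kappa=2$'' formula no longer applies, though away from that endpoint it is fine and in fact sharper than the paper's.)

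The paper's proof sidesteps this by a coarser block-sparsification accounting: it keeps the shortest paths of the reduced graph $G_{1/b}$ fixed (no factor of $b$ choices per intermediate block) and only multiplies each on-path degree by $b$, i.e.\ $\mathrm{IP}(G_b)=\mathrm{IP}(G_{1/b})/b^{\kappa(G_{1/b})}$ and $\mathrm{CC}(G_b)=b\,\mathrm{CC}(G_{1/b})$, so that $\mathrm{NIP}(G_b)=b^{-(\kappa+1)}\,\mathrm{NIP}_{G}(N/b)$ with no compensating $b^{k-1}$ in the numerator. Under that model the star case is immediate ($f(b)=1/(N_0 b^{2})$) and the hypercube case follows from a Stirling estimate plus a derivative sign check, valid for $N_0\ge 128$. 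So the gap in your proposal is not algebraic but a modelling divergence at the path-counting step: to prove the theorem as stated you must either adopt the paper's convention (shortest-path count unchanged under blocking), or restrict $b$ to the regime $b=o(N)$ where your exact formula is indeed decreasing; as written, your route leads to a statement that contradicts the theorem near $b=N/2$.
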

We put the proof in Appendix \ref{proof:blocksparse}. Although block sparse reduces Normalized Information Payload in most situation, we still adapt it in our experiment section for the sake of training efficiency.

\section{Experiments}
Experiments are conducted to validate theoretical results of Normalized Information Payload and then show performances of Hypercube Transformer. All experiments are conducted on RTX 3090 GPU. In detail, We implement all Sparse Transformers with self-loop and block sparse using triton~\cite{10.1145/3315508.3329973}. We choose to conduct experiments mainly on block size 16 which is the smallest size allowed for triton. Recent work~\cite{guo2021longt5} also shows that block size 16 is cost effective on long range text tasks. 

\subsection{$\text{NIP}(G)$ and Performance on LRA}
To validate Normalized Information Payload, we focus on how our Normalized Information Payload is related to real-world situations.
\label{subsection:ablationforgraphs}
We rely on Long Range Arena (LRA) benchmark~\cite{tay2020long} for validation for our graph scoring function, $\text{NIP}(G)$. Long Range Arena is a benchmark testing how well a model can capture long range dependencies with tasks with input lengths varied from 1024 to 4096. By deliberately making the task harder, such as training text on byte level and image on pixel level, LRA serves as a systematic and popular proxy for performance to computational efficiency. In our case, LRA restricts models to have equal or less than four layers, making it a good test bed for layer efficiency. We choose several different graphs and apply them on $G$-attention layers in Transformer to evaluate performance. 

\textbf{Implementation details.} For the sake of consistency and simplicity, we do not strictly follow setting in the official implementation. Instead we use a four-layer network, in which every two layers with shared parameters for all tasks. All hyper-parameters are listed in Table \ref{tab:lra_hparams} in Appendix \ref{appendix:hyperparams}. Empirically we find that our architecture has fewer parameters than the original paper. We follow optimization configuration in~\cite{ma2021luna} and run experiments for different sparse graphs listed in Table \ref{tab:Gscore}. Results for our experiments and Normalized Information Payload according to Table \ref{tab:Gscore} with $N=2048$ are in Table \ref{tab:lra}. We run each experiment for three times with different random seeds and report the average accuracy.

\begin{figure}[h]
  \centering 
  \includegraphics[scale=0.25]{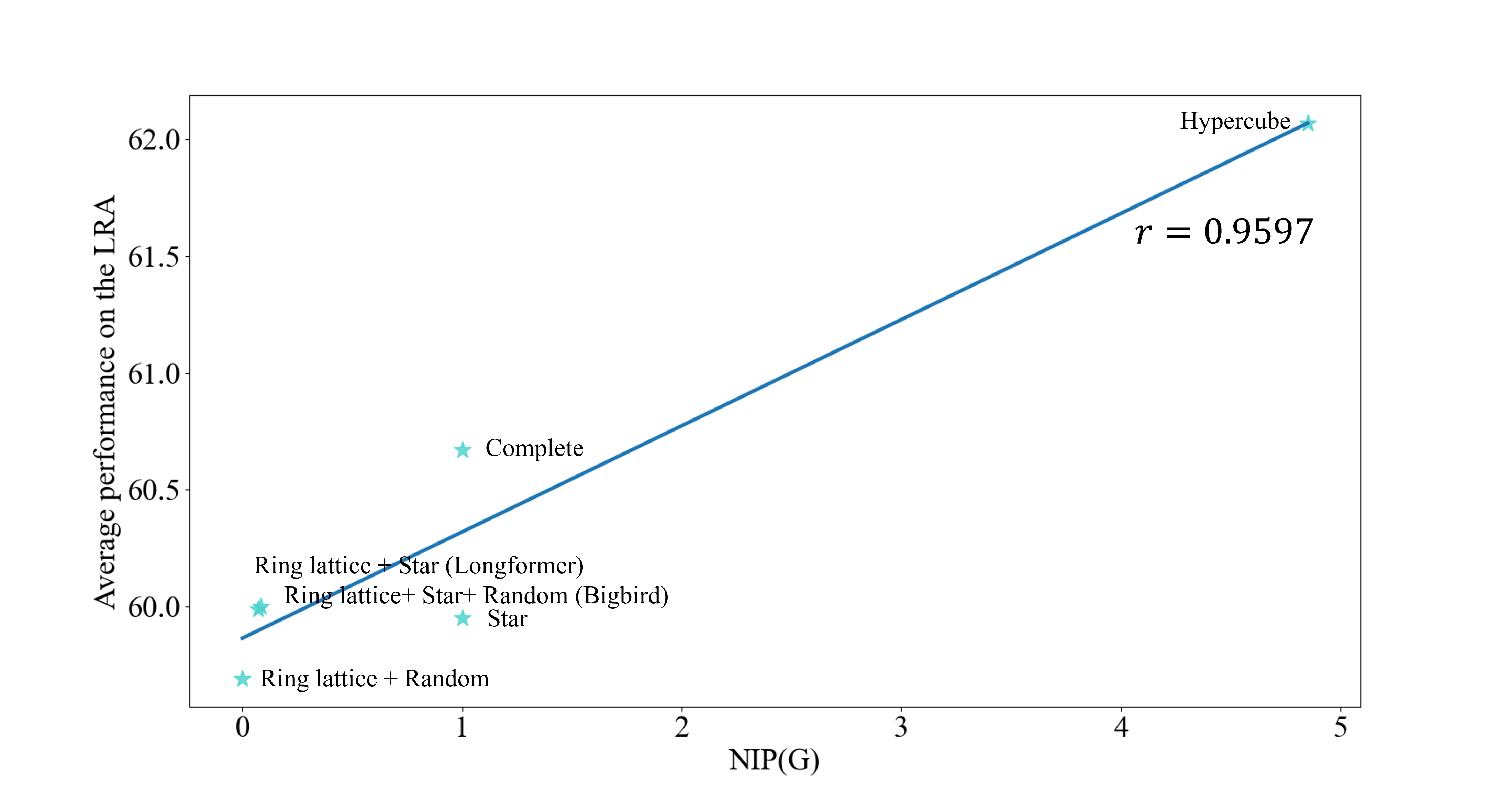} %
  \caption{Average performance on the LRA benchmark can have strong proposition with our proposed Normalized Information Payload.}
  \label{fig:IT-performance}
\end{figure}

\textbf{Results.} In Table \ref{tab:lra}, we observe that Hypercube Transformer outperforms all other graphs including complete graph in average. Apart from star graph, which suffers from information interference in the global node and results in learning problems, other graphs' performance fits Normalized Information Payload well. We draw the average performance of various graphs on the LRA benchmark to $\text{NIP}(G)$ in Figure \ref{fig:IT-performance} for these graphs. The average performance is strongly proportional to Normalized Information Payload in that the Pearson Correlation Coefficient between two variables is equal to \textbf{0.96}, which lays validation for our graph scoring function well. 

\textbf{Speedup.} For speedup, we only report the training speedup of hypercube compared to complete graph at 4096 length. We do not compare hypercube with other sparse patterns because to make fair comparison between BigBird pattern and hypercube, we choose the number of blocks for each pattern to be the same. Therefore, the speedup of two sparse patterns are close. Detailed block numbers are listed in Appendix \ref{appendix:hyperparams}.

\begin{table}[!h]
\centering
\small
\caption{Performance of Hypercube Transformer with different block sizes.}
\vskip 0.15in
\begin{tabular}{@{}lccccccccc@{}}
\toprule
\bf Hypercube     & \bf Retrieval & \bf Image \\
\midrule
Block size 16 & 81.16         & 53.79     \\  
Block size 32 & 80.74         & 51.98     \\  
Block size 64 & 80.75         & 50.75     \\  
\bottomrule
\end{tabular}
\label{tab:blocks}
\end{table}
\textbf{Block size impact.} From Table \ref{tab:lra}, we find that Retrieval and Image can differentiate graphs better among five sub-tasks. So to validate theory \ref{blocksparse}, we investigate how block size affects performance empirically on these two sub-tasks of LRA. In Table \ref{tab:blocks}, we find that larger the block size, lower the performance is, which is corresponding to our theory \ref{blocksparse} that larger block size will result in lower Normalized Information Payload.

\begin{table}[h]
\centering
\small
\caption{Performances for different models on Long Range Arena. The performance of previous works in the first area are from ~\cite{tay2020long}.}
\vskip 0.15in
\setlength{\tabcolsep}{.5mm}
\begin{tabular}{@{}lccccccccc@{}}
\toprule
Model                              & \bf ListOps  & \bf Text  & \bf Retrieval  & \bf Image  & \bf Path.   & \bf Avg.       \\
\#Length                           & 2K           & 4K        & 4K             & 1K         & 1K               &                \\
\midrule
Transformer &  36.37 & 64.27 &
        57.46 &  42.44& 71.40  & 54.39\\

        Local Attention &  15.82 &52.98 & 53.39 & 41.46& 66.63  & 46.06 \\
        Sparse Trans.  & 17.07 & 63.58 & 59.59 & 44.24 & 71.71   & 51.24 \\
        Longformer& 35.63& 62.85 & 56.89&  42.22 & 69.71  & 53.46 \\
        Linformer &   35.70 & 53.94&  52.27 & 38.56 & 76.34  & 51.36\\
        Reformer &  37.27 & 56.10 & 53.40 &  38.07& 68.50  & 50.67 \\
        Sinkhorn Trans. &33.67 & 61.20 & 53.83 & 41.23 & 67.45  & 51.39\\
        Synthesizer & 36.99 & 61.68 & 54.67  &41.61 & 69.45  & 52.88\\
        BigBird  & 36.05 & 64.02 & 59.29  &  40.83 & 74.87  & 55.01 \\
        Linear Trans. & 16.13&  65.90 & 53.09 & 42.34 & 75.30  & 50.55 \\ 
        Performer  &18.01& 65.40 & 53.82 & 42.77 & 77.05  & 51.41\\
\midrule
Fnet  & 35.33        & 65.11     & 59.61         & 38.67      & 77.80           & 55.30              \\
H-Trans.-1D           & \textbf{49.53}           & \textbf{78.69}         & 63.99              & 46.05         & 68.78                & 61.41              \\
Nystromformer              & 37.15           & 65.52         & 79.56              & 41.58          & 70.94               & 58.95              \\
Luna-256      & 37.98           & 65.78     & 79.56          & 47.86      & \textbf{78.55}                & 61.95              \\
Pixelfly      & 37.65           & 66.78     & 80.55          & 42.35      & 72.01                & 59.86              \\
\midrule
\tabincell{l}{Hypercube\\ Trans.}                 & 37.48         & 63.79     & \textbf{81.16}          & \textbf{53.79}      & 74.12            & \textbf{62.07} \\
\bottomrule
\end{tabular}
\label{tab:lra-other}
\end{table}

\subsection{Long-Context Sequence Modeling}
To show the effectiveness of Hypercube Transformer, we present the performances of previous works on LRA briefly in Table \ref{tab:lra-other}. Compared to recent Fnet~\cite{leethorp2021fnet}, Nystromformer~\cite{xiong2021nystromformer}, LUNA~\cite{ma2021luna}, H-Transformer-1D~\cite{zhu2021hTransformer1d}, Pixelfly~\cite{chen2021pixelated}, our Hypercube Transformer achieves better average performance while using architecture with fewer parameters. H-Transformer-1D has very good performance on NLP tasks because their hierarchical method provides inductive bias for natural language. Hypercube Transformer improves the result of Image by a large margin because hypercube is like high-dimensional view of a picture and can catch more information. In summary, Hypercube Transformer shows potential for models only based on sparse graphs that can grab all interactions among tokens.


\subsection{Masked Language Modeling for Large-Scale Pretraining}
One important application of Transformer is Large-Scale Pretraining like BERT~\cite{devlin2019bert}. Here we follow~\cite{devlin2019bert,izsak2021train} to pretrain Hypercube Transformer from scratch, denoted by CubeBERT, and finetune it on downstream tasks. We denote the original BERT-large by BERT, our pretrained BERT-large with 128 length by $\text{BERT}_{128}$, and our pretrained CubeBERT-large with 128 length by CubeBERT$_{128}$. The detailed structure of CubeBERT is the same as BERT and experimental details are provided in Appendix \ref{appendix:hyperparams}. We use English Wikipedia and BookCorpus2~\cite{gao2020pile} as our pretraining datesets. 

\begin{table}[!h]
\centering
\small
\caption{Finetuning MLM on Wikitext103.}
\vskip 0.15in
\begin{tabular}{@{}lccccccccc@{}}
\toprule
\bf Model     & \bf  Loss & \bf Speedup \\
\midrule
$\text{BERT}_{128}$  &    1.18    &   $1\times$  \\  
$\text{CubeBERT}_{128}$ &     1.05     &  $1.4\times$ \\
\bottomrule
\end{tabular}
\label{tab:mlm}
\end{table}

\begin{figure}[h]
  \centering 
  \includegraphics[scale=0.28]{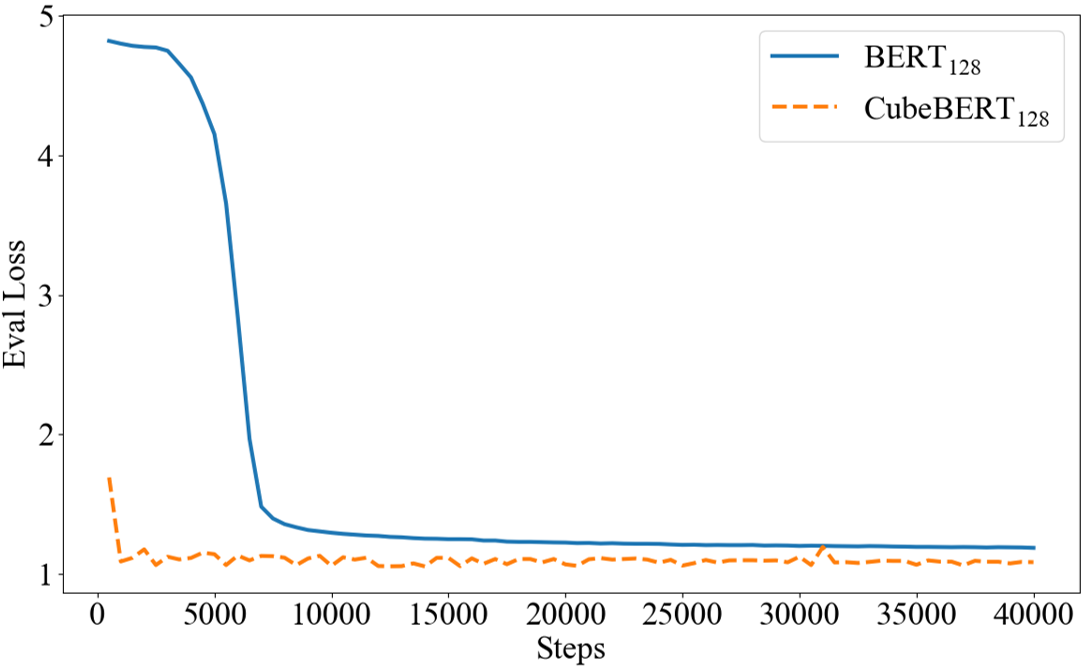} %
  \caption{CubeBERT$_{128}$ shows faster dropping rate of eval loss than $\text{BERT}_{128}$ when finetuning on Wikitext103.}
  \label{fig:mlm}
\end{figure}

\begin{table*}[h]
\centering
\small
\setlength{\tabcolsep}{.8mm}
\caption{Performances on GLUE test sets. For our implementation, results for RTE, STS and MRPC are reported by first finetuning on the MNLI model instead of the baseline pretrained model.}
\vskip 0.15in
\begin{tabular}{@{}lcccccccccc@{}}
\toprule
                                         & \bf MNLI-m/mm & \bf QNLI & \bf QQP & \bf RTE & \bf SST-2 & \bf MRPC & \bf CoLA          & \bf STS-B      & \bf Avg. & \bf Speedup \\
\#metric                                 & Acc           & Acc      & F1      & Acc     & Acc       & F1       & Matthew's corr.   & Spearman corr. & &  \\
\#Examples                               & 393k          & 105k     & 364k    & 2.5k    & 67k       & 3.7k     & 8.5k              & 7k             & &  \\
\midrule

BERT           & 86.0/85.2   & 92.6   & 72.0    & 78.3    & 94.5      & 89.9     & 60.9              & 87.5           & 83.0 & $1\times$\\
$\text{BERT}_{128}$             & 84.9/84.8     & 91.1     & 71.0    & 76.6   & 93.1      & 90.4    & 58.0              & 88.3           & 82.0 & $1\times$\\
\midrule
$\text{CubeBERT}_{128}$    & 85.9/85.0     & 90.8     & 71.3    & 77.1    & \textbf{95.3}      & 86.4     & \textbf{61.5}              & \textbf{87.6}           & 82.3 & $1.1\times$\\
\bottomrule
\end{tabular}
\label{tab:glue_tasks}
\end{table*}

\textbf{Finetuning on longer contexts.} We first finetune $\text{BERT}_{128}$ and CubeBERT$_{128}$ on a language model dataset Wikitext103 that could model sequences at 512 length to show the strong generalization ability of Hypercube Transformer. We initiate the position embeddings out of 128 randomly and finetune $\text{BERT}_{128}$ and CubeBERT$_{128}$ in a Masked Language Model (MLM) way, results are provided in Table \ref{tab:mlm}. We can see that the MLM loss (perplexity) of finetuned CubeBERT$_{128}$ (1.05) is lower than that (1.18) of our $\text{BERT}_{128}$, demonstrating the better generalization ability of Hypercube Transformer than vanilla one. At the same time, attribute to sparse attention, CubeBERT$_{128}$ still has a 1.4x speedup compared to $\text{BERT}_{128}$ at the training stage of finetuning. Another interesting finding is that the evaluation loss of CubeBERT$_{128}$ drops faster than $\text{BERT}_{128}$, implying that Hypercube Transformer could learn faster for training short and finetuning long in certain circumstances.

\textbf{Finetuning on GLUE.} For downstream tasks at 128 length, we finetune CubeBERT$_{128}$ on GLUE benchmark and compare CubeBERT$_{128}$ with BERT. Results are reported in Table \ref{tab:glue_tasks}. To do fair comparison, we also provide finetuning results for our reimplemented $\text{BERT}_{128}$. We observe that CubeBERT$_{128}$ achieves comparable performance without global attention routing information directly to [CLS] token. This demonstrate Hypercube sparsity is effective on information passing on graph. In detail, our CubeBERT$_{128}$ can have on par performance with BERT in most tasks (MNLI, QQP, SST-2, CoLA, STS-B), all with differences less than 1 point. For rest tasks like QNLI, RTE and MRPC, CubeBERT$_{128}$ is lower than BERT in 2 points except MRPC which has the smallest number of dataset examples. Overall, the average performance of CubeBERT$_{128}$ is slightly lower than BERT within 1 point and higher than $\text{BERT}_{128}$. The speedup for CubeBERT$_{128}$ is measured at the training training for finetuning GLUE. Because of the sequence length is 128, speedup is 1.1x, which is not remarkable compared to 1.4x for sequence at 512 length.

\section{Related Work}
To the best of our knowledge, our paper is related to sparse attention and their analysis.

\textbf{Theoretical Analysis.} Previous works mainly focus on the approximation of sparse attention to vanilla attention. BigBird~\cite{zaheer2021big} first proved  sparse attention mechanism defined by any graph containing star graph is a universal approximator. They also showed Turing Completeness of sparse encoder and sparse decoder. Pixelated Butterfly~\cite{chen2021pixelated} proved their flat butterfly matrices can approximate butterfly matrices which can tightly represent all structured matrices. Our paper focuses on finding graphs with better properties, not graphs to approximate complete graph. Axial Attention~\cite{ho2019axial} mentioned having the full receptive field, which is similar to grabbing all interactions among tokens in the paper. ~\cite{e21090863} proposed Information Capacity between two nodes based on the path transferring least information while we propose Information Payload based on all paths between two nodes.   

\textbf{Sparse Attention.} Previous Transformers adapt sparse attention including Star Transformer~\cite{guo2019starTransformer}, Sparse Transformer~\cite{child2019generating}, Longformer~\cite{beltagy2020longformer} and BigBird~\cite{zaheer2021big}. Compared to all those patterns, our proposed Hypercube Transformer adapts a fixed simple sparse pattern and is easy to implement. The recent flat butterfly pattern~\cite{chen2021pixelated} is also another simple sparse pattern with $O(N\log N)$ complexity with input length $N$. NAS has been applied to learning sparse patterns like SparseBERT~\cite{https://doi.org/10.48550/arxiv.2102.12871}, but searching methods cannot be applied to long-context tasks. Besides, their learned sparse patterns are data-dependent and cannot generalize.

\section{Conclusion}
We have introduced Normalized Information Payload (NIP), a graph scoring function for various graphs used in Transformer attention mechanism. By taking Computational Complexity and Information Payload into consideration, we can analyze sparse graphs via NIP to find what dense graph do we need for self-attention. After examining existed sparse patterns with NIP, we further present hypercube and utilize it in simple masked-based sparse Transformer. Hypercube Transformer achieves comparable or even better performances compared to strong baselines in pretrain tasks in NLP and long-context sequence modeling while reducing the usage of memory and computation. Experiments on different graphs on LRA benchmark also lay validation for Normalized Information Payload well. We hope our graph scoring function will reveal important parts behind different sparse patterns. In future work, we may utilize hypercube structure in other modules like MLPs. Another potential direction is to apply Hypercube Transformer to NLP tasks requiring long-context modeling like summarization and question answering.

\section*{Acknowledgements}
 This work was supported by the National Key Research and Development Program of China (No. 2020AAA0108702), the National Natural Science Foundation of China (No. 62022027) and the major key project of PCL (No. PCL2021A12).

\bibliography{references}
\bibliographystyle{icml2022}

\appendix
\onecolumn
\section{Computation for Normalized Information Payload}
\label{appendix:computation}
Since computation for NIP$(G)$ is related to CC$(G)$ and IP$(G)$, we respectively show these two parts.
\subsection{Computation for $\text{CC}(G)$}
We compute $\rho(G)$ and $\kappa(G)$ to get $\text{CC}(G)$. For all graphs, $\rho(G)$ can be computed easily through definition. For E-R random graph, we choose the probability of every edge to exist $p$ to be $\Theta\left(\frac{\log N}{N}\right)$ for the sake of connectivity. $\kappa(G)$ is the diameter of the graph, which can be straightforwardly computed by definition of graphs. For E-R random graph, the shortest path between any two nodes is logarithmic in the number of nodes~\cite{Chung15879,katzav2018distribution}, thus it equals to $\Theta(\log N)$. For $w$ used in ring lattice, we assume $w \ll N$ for approximation.

$\rho(G)$, $\kappa(G)$ and $\text{CC}(G)$ for graphs in Table \ref{tab:Gscore} are listed in Table \ref{tab:CCG}. 

\begin{table}[!ht]
\centering
\small

\caption{\footnotesize $\rho(G)$, $\kappa(G)$ and $\text{CC}(G)$ for graphs in Table \ref{tab:Gscore}, where $w$ is the number of neighbors of a ring lattice.}\label{tab:CCG}
\vskip 0.15in
\begin{tabular}{@{}lccccccccc@{}}
\toprule
Type of graph  & $\rho(G)$ & $\kappa(G)$     & $\text{CC}(G)$     \\
\midrule
Complete      & $\Theta(N)$             & $\Theta(1)$ & $\Theta(N)$                                            \\
E-R random    &$\Theta(\log N)$     & $\Theta(\log N)$ & $\Theta(\log^2 N)$  \\
Tree          & $\Theta(1)$        & $\Theta(\log N)$                   & $\Theta(\log N)$                  \\
Star          & $\Theta(1)$              & $\Theta(1)$                                    & $\Theta(1)$                                                \\
\midrule

\tabincell{l}{Ring lattice\\\quad+ E-R random }   & $\Theta(\log N+w)$        & $\Theta(\log N)$                   &  $\Theta(\log N(\log N+w))$                      \\

\tabincell{l}{Ring lattice\\
\quad+ Star (Longformer) }         & $\Theta(w)$              & $\Theta(1)$                                       & $\Theta(w)$                                               \\
\tabincell{l}{Ring lattice\\
\quad+ Star\\
\quad+ E-R random (BigBird)}         & $\Theta(\log N+w)$              &$\Theta(1)$                                  & $\Theta(\log N+w)$                                             \\
\midrule
Hypercube     &$\Theta(\log N)$     & $\Theta(\log N)$         & $\Theta(\log^2 N)$         \\
\bottomrule
\end{tabular}
\end{table}

\subsection{Computation for $\text{IP}(G)$}
To computate $\text{IP}(G)$, we first find node pair $(a,b)$ whose distance is the diameter of the graph and calculate Information Payload for that node pair.

\begin{equation}
    I_{ab}=\sum_{P_{ab} \in  \mathcal{P}_{ab} }R(P_{ab}).
\end{equation}
For graphs in Table \ref{tab:Gscore}, $R(P_{ab})$ is constant for all paths $P_{ab} \in  \mathcal{P}_{ab}$. So we can compute IP$(G)$ as folows, where $|\mathcal{P}_{ab}|$ is the number of paths in $\mathcal{P}_{ab}$, 
\begin{equation}
    I_{ab}=|\mathcal{P}_{ab}|R(P_{ab}).
\end{equation}
For \textbf{Complete graph, Tree, Star graph, Ring lattice + random, Longformer pattern and BigBird pattern}, the number of paths $|\mathcal{P}_{ab}|$ is 1 and that path can be easily found. $R(P_{ab})$ for that path can be computed by definition. Here we choose the degree of non-global node in Longformer pattern and BigBird pattern to be $w$ and $\log N + w$ respectively.

\begin{table*}[!h]
\centering
\small
\caption{\footnotesize Information Payload for one path $R(P_{ab})$ and the number of paths $|\mathcal{P}_{ab}|$ between node $a$ and node $b$. $w$ is the number of neighbors of a ring lattice. $p$ is the probability of one edge exist in E-R random graph and is set as $\Theta(\frac{\log N}{N})$ for IP$(G)$.}
\vskip 0.15in
\begin{tabular}{@{}lccccccccc@{}}
\toprule
Type of graph  & $R(P_{ab})$ & $|\mathcal{P}_{ab}|$     & $\text{IP}(G)$     \\
\midrule
Complete      &   $\Theta\left(\frac{1}{N}\right)$           &             1                 & $\Theta\left(\frac{1}{N}\right)$                                              \\
E-R random    &  $\Theta\left(\frac{1}{(\log N)^{\log N}}\right)$  &  $p^k(k-1)!C_{N-2}^{k-1}$ &  $\Theta\left(\frac{(N-2)!}{N^{\log N}(N-\log N)!}\right)$  \\
Tree          &   $\Theta(\frac{1}{N^{\log(9)}})$    &      1              & $\Theta(\frac{1}{N^{\log(9)}})$                   \\
Star          &     $\Theta\left(\frac{1}{N}\right)$        &     1                              & $\Theta\left(\frac{1}{N}\right)$                                               \\
\midrule

\tabincell{l}{Ring lattice\\\quad+ E-R random }   & $\Theta\left(\frac{1}{(\log N+w)^{\log N}}\right)$        &       $p^k(k-1)!C_{N-2}^{k-1}$           & $\Theta\left(\frac{(N-2)!}{(N+\frac{w}{\log N})^{\log N}(N-\log N)!}\right)$                   \\

\tabincell{l}{Ring lattice\\
\quad+ Star (Longformer) }         &  $\Theta\left(\frac{1}{Nw}\right)$        &  1                               & $\Theta\left(\frac{1}{Nw}\right)$                                              \\
\tabincell{l}{Ring lattice\\
\quad+ Star\\
\quad+ E-R random (BigBird)}         &    $\Theta\left(\frac{1}{N(\log N+w)}\right)$         &                                1 &$\Theta\left(\frac{1}{N(\log N+w)}\right)$                                               \\
\midrule
Hypercube     & $\Theta\left(\frac{1}{(\log N)^{\log N}}\right)$  &  $(\log N)!$        & $\Theta\left(\frac{(\log N)!}{(\log N)^{\log N}}\right)$           \\
\bottomrule
\end{tabular}
\end{table*}
For \textbf{E-R random graph and Ring lattice + E-R random}, we assume adding neighbors does not shorten the diameter of the graph. Thus the $|\mathcal{P}_{ab}|$ and length of the shortest path between any two nodes in ring lattice + E-R random is the same as those in E-R random graph. In E-R random graph, the expected number of a $k$-length path between two nodes is $p^k(k-1)!C_{N-2}^{k-1}$ where $p$ is the probability for one edge to exist. That is the value of $|\mathcal{P}_{ab}|$. The expected degree for every node in E-R random graph and ring lattice + E-R random is $\Theta(\log N)$ and $\Theta(\log N +w)$ respectively, so the expectation of $R$ for one path is $\Theta(\frac{1}{(\log N)^{\log N}})$ and $\Theta\left(\frac{1}{(\log N+w)^{\log N}}\right)$ respectively. In this case, $k=\log N$ and $p=\Theta(\frac{\log N}{N})$, we compute $\text{IP}(G)$ for E-R random graph as follows for example.
\begin{align}
        \text{IP}(G) &=p^k(k-1)!C_{N-2}^{k-1} \times \Theta\left(\frac{1}{(\log N)^{\log N}}\right) \\
        &=\Theta\left((\frac{\log N}{N})^{\log N}(\log N-1)!C_{N-2}^{\log N-1}\frac{1}{(\log N)^{\log N}}\right)\\
        &=\Theta\left(\frac{(\log N-1)!C_{N-2}^{\log N-1}}{N^{\log N}}\right)\\
        &=\Theta\left(\frac{(N-2)!}{N^{\log N}(N-\log N)!}\right).
\end{align}

For \textbf{Hypercube}, every node in $P_{ab}$ has a degree of $\Theta(\log N)$ and $len(P_{ab})$ equals to $\Theta(\log N)$, so $R(P_{ab})$ is $\Theta\left(\frac{1}{(\log N)^{\log N}}\right)$. For $|\mathcal{P}_{ab}|$, we consider the choices of every step in the random walk. For the first step it is $\log N$, and it is $\log N -1$ for the second step. Thus the $|\mathcal{P}_{ab}|$ equals to the full-permutation number of $\log N$, $(\log N)!$.

\section{Proofs and Algorithm}
\label{appendix:proof}

\subsection{Proof for Theorem \ref{Iab}}
\label{proof:Iab}

Theorem \ref{Iab} states that Information Payload between two nodes $I_{ab}$ equals to the probability of a random walk starts from node $b$ that ends in node $a$ at step $len(P_{ab})$.

\begin{proof}
First we introduce lemma \ref{pabe}.
\begin{lemma}
\label{pabe}
Information Payload for one path $R(P_{ab})$ equals to the probability ($Pr(P_{ba})$) of a random walk starts from $b$ and ends in $a$ following reversed path of $P_{ab}$.
\end{lemma}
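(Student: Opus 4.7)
The plan is to unfold both sides of the claimed equality into explicit products indexed along the same path, and then observe they agree term-by-term. Write the path as $P_{ab} = (v_0, v_1, \ldots, v_k)$ with $v_0 = a$, $v_k = b$, and $k = len(P_{ab})$. Then the reversed path is $P_{ba} = (v_k, v_{k-1}, \ldots, v_0)$, traversed by a random walker who starts at $b = v_k$ and at each step $i = k, k-1, \ldots, 1$ is currently at $v_i$ and moves to the specific neighbor $v_{i-1}$.

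The first step is to compute $Pr(P_{ba})$ by the chain rule for a simple (uniform-neighbor) random walk. Since from node $v_i$ the walk chooses each of its $deg(v_i)$ neighbors with probability $\tfrac{1}{deg(v_i)}$, and the transition to the particular neighbor $v_{i-1}$ is one such choice, independent of previous steps given the current position, the probability of following the reversed path exactly is
\begin{equation}
Pr(P_{ba}) \;=\; \prod_{i=1}^{k} \frac{1}{deg(v_i)}.
\end{equation}
Here the index runs over $i = 1, \ldots, k$ rather than $0, \ldots, k$ because no transition is made out of the terminal node $v_0 = a$; the walker stops upon arrival.

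The second step is to match this expression against $R(P_{ab})$. By Definition \ref{def:setofpaths},
\begin{equation}
R(P_{ab}) \;=\; \prod_{v \in P_{ab},\, v \neq a} \frac{1}{deg(v)} \;=\; \prod_{i=1}^{k} \frac{1}{deg(v_i)},
\end{equation}
since the set $\{v \in P_{ab} : v \neq a\}$ is precisely $\{v_1, \ldots, v_k\}$. Comparing the two displays gives $R(P_{ab}) = Pr(P_{ba})$, which is the claim.

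The only subtle point, and the one worth being explicit about, is the asymmetry of the product: $R$ excludes the starting node $a$ of the forward path, while the random-walk probability excludes the starting node $b$ of the reversed path. These two exclusions refer to the \emph{same} endpoint $v_0 = a$ of the underlying undirected path (since the walker is not required to leave its destination, no $1/deg(a)$ factor appears), which is exactly why the two products coincide. I do not expect any genuine obstacle beyond carefully aligning these indices; no assumption on the graph (simple, connected, self-loops, multi-edges between other nodes) is needed, as long as $deg(v)$ is interpreted consistently with the random-walk transition probabilities used throughout the paper.
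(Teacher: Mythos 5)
Your proof is correct and takes essentially the same route as the paper's: compute the probability of the reversed walk by the chain rule for a uniform random walk, note that the resulting product runs over exactly the nodes of the path other than $a$ (no transition is made out of the destination), and match it term by term with $R(P_{ab})$. One wording slip in your closing paragraph: the random-walk probability excludes the \emph{terminal} node $a$ of the reversed path, not the starting node $b$ (the factor $1/deg(b)$ does appear, as your displayed formula correctly shows), so the sentence should be fixed even though the computation itself is right.
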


\begin{proof} 
Let us consider random walk on the reversed path of $P_{ab}$, namely $P_{ba}$. For the $i$th step we take, the probability equals to $1/deg(v)$ where $v$ is the node that we are at the $i-1$th step. So the total probability of this path is $Pr(P_{ba})=\prod_{v \in \mathcal{P}_{ba} \And v \neq a}\frac{1}{deg(v)}$. We know that $\{v | \in P_{ba} \And v \neq a\}$ equals to $\{v | v \in P_{ab} \And v \neq a\}$. So $R(P_{ab})=Pr(P_{ba})$.
\end{proof}

Then, the probability of a random walk starts from $b$ that ends in $a$ at the $len(P_{ab})$ step, denoted by $SPr(P_{ba})$ is the summation of the probability of all paths in $P_{ba}$.
\begin{equation}
    SPr_{ba}=\sum_{P_{ba} \in \mathcal{P}_{ba}} Pr(P_{ba}).
\end{equation}

For every $R(P_{ab})$, from lemma \ref{pabe} we know that for every $P_{ab} \in \mathcal{P}_{ab}$, $R(P_{ab})=Pr(P_{ba})$. So the Information Payload between two nodes $a,b$
\begin{align}
    I_{ab} &=\sum_{P_{ab} \in \mathcal{P}_{ab}}R(P_{ab}) \\
    &=\sum_{P_{ab} \in \mathcal{P}_{ab}}Pr(P_{ba})\\
    &=\sum_{P_{ba} \in \mathcal{P}_{ba}}Pr(P_{ba})\\
    &=SPr_{ba}.
\end{align}
\end{proof}
Since we have Theorem \ref{Iab}, we can compute $\text{IP}(G)$ via random walk as below. 

\textbf{Computing Information Payload $\text{IP}(G)$ via random walk.} Using adjacent matrix $A_G$ and diagonal matrix $D=diag(\frac{1}{d_1},\frac{1}{d_2},...\frac{1}{d_N})$, we can easily calculate $I_{ab}$ by random walk.

For each $a,b$, we have
\begin{equation}
\label{randomwalk}
\begin{split}
M      & = DA_G, \\
M^i    & = M^{len(P_{ab})}, \\
I_{ab}      & = [(M^i)^T]_{ab}.
\end{split}
\end{equation}


According to Definition \ref{def:FG}, let $len(P_{ab})=\kappa(G)$ in equation \eqref{randomwalk} and $\Delta$ be the set of node pairs whose distance is the diameter of the graph, we can get 
\begin{equation}
\text{IP}(G):=\min_{(a,b) \in \Delta}([((DA_G)^{\kappa(G)})^T]_{ab}).
\end{equation}

\subsection{Proof for Theorem \ref{blocksparse}}
\label{proof:blocksparse}

Theorem \ref{blocksparse} states that larger block size makes star graph and hypercube have less Normalized Information Payload.

\begin{proof}
Given one graph $G_0(\mathcal{V}_0,\mathcal{E}_0)$ where $|\mathcal{V}_0|=N_0$, we denote the two graphs adapting block sparse with different block sizes $x$ and $y$ by $G_{x}(\mathcal{V}_x,\mathcal{E}_x)$ and $G_{y}(\mathcal{V}_y,\mathcal{E}_y)$. We know that $|\mathcal{V}_x|$ and $|\mathcal{V}_y|$ all equal to $N_0$. Let $x<y$, our goal is to prove that NIP$(G_x)>\text{NIP}(G_y)$.

We first use $b$ to denote the block size and deduce NIP$(G_b)$. We have another affiliated graph $G_{1/b}(\mathcal{V}_{1/b},\mathcal{V}_{1/b})$ where $|\mathcal{V}_{1/b}|=\frac{N_0}{b}$ that adapts the same sparse pattern as $G_0$. Here we use a function of sequence length $N$ to denote NIP$(G)$, namely $NIP_G(N)$.

We first consider CC$(G_b)$.
\begin{align}
    \kappa(G_b)&=\kappa(G_{1/b}), \\
    \rho(G_b)&=b\rho(G_{1/b}),\\
\end{align}
So 
\begin{equation}
        \text{CC}(G_b)=b \cdot \text{CC}(G_{1/b}).
\end{equation}

Next, for IP$(G_b)$, the longest path in $G_b$ is equal to that in $G_{1/b}$, while any node $v$ in the path changes its degree $deg(v)$ to $b\cdot deg(v)$. Thus we have 

\begin{equation}
      \text{IP}(G_{b})=\frac{\text{IP}(G_{1/b})}{b^{\kappa(G_{1/b})}}. 
\end{equation}

The final $\text{NIP}(G_b)$ then 
\begin{align}
    \text{NIP}(G_b)&=\frac{\text{IP}(G_{b})}{\text{CC}(G_b)} \\
    &=\frac{\text{IP}(G_{1/b})}{b \cdot \text{CC}(G_{1/b})b^{\kappa(G_{1/b})}} \\
    &=\frac{1}{b^{\kappa(G_{1/b})+1}}\text{NIP}(G_{1/b})\\
    &=\frac{1}{b^{\kappa(G_{1/b})+1}}NIP_G\left(\frac{N_0}{b}\right).
\end{align}
Let $f(b)=\text{NIP}(G_b)$, our goal is to prove that $f(b)$ is monotonically decreasing for star, hypercube.

\textbf{Star graph}. While $\kappa(G)$ for star is always 2, the $f(b)$ for star is as follows
\begin{align}
    f(b)&=\frac{1}{b^3}NIP_G\left(\frac{N_0}{b}\right)\\
    &=\frac{b}{N_0b^3} \\
    &=\frac{1}{N_0b^2}. 
\end{align}
It's monotonically decreasing for $b$.

\textbf{Hypercube}. We do the same computation for hypercube. $\kappa(G)$ for hypercube is $\log N$, so 
\begin{align}
    f(b) &=\frac{1}{b^{\log N_0 +1}}NIP_G\left(\frac{N_0}{b}\right)\\
    &=\frac{1}{b^{\log N_0 +1}}\frac{(\log \frac{N_0}{b})!}{(\log \frac{N_0}{b})^{\log \frac{N_0}{b}+2}}.
\end{align}
Using Sterling Equation to approximate factorial, where $c$ is $\sqrt{2\pi}$, we get 
\begin{align}
    f(b) &\approx \frac{1}{b^{\log N_0 +1}}\frac{c(\log \frac{N_0}{b})^{(\log \frac{N_0}{b})+\frac{1}{2}}e^{-(\log \frac{N_0}{b})}}{(\log \frac{N_0}{b})^{\log \frac{N_0}{b}+2}}\\
    &=\frac{1}{b^{\log N_0 +1}}\frac{c}{(\log \frac{N_0}{b})^{1.5}e^{(\log \frac{N_0}{b})}}\\
    &= \frac{1}{b^{\log N_0 +1}}\frac{c}{(\log \frac{N_0}{b})^{1.5} (\frac{N_0}{b})^{\frac{1}{\ln 2}}}\\
    &=\frac{C}{b^{\log N_0 +1 -\frac{1}{\ln 2}}(\log \frac{N_0}{b})^{1.5} }.
\end{align}
while $C$ is another constant. To make $f(b)$ monotonically decreasing, the derivative of $f(b)$ should be less than zero, which is true when
\begin{equation}
    \log N_0 +1 -\frac{1}{\ln 2} > \frac{3}{2\ln 2(\log N_0 - \log b)}.
\end{equation}
Note that $b \leq \frac{N}{2}$, thus we have to prove
\begin{equation}
    \log N_0 +1 -\frac{1}{\ln 2} > \frac{3}{2\ln 2},
\end{equation}
which is true for $N_0 \geq 128$.

Now we have proved Theorem \ref{blocksparse}.

\end{proof}

\subsection{Iterative mapping algorithm}
\label{alg:mapping}
We demonstrate the iterative binary mapping algorithm here. $<<$ means left logical shift for binary numbers.
\begin{algorithm}[]
  \caption{Binary representation of sequences }
  \label{Br}
\begin{algorithmic}
  \STATE {\bfseries Input:} sequence $S=(s_0,...,s_{N-1})$
  \STATE {\bfseries Output:} Binary representation  $X^N=X_0X_1X_2...X_i...X_{N-1}$

  \STATE Initialize $X=(0,1)$
    \REPEAT 
    \STATE $Y=()$
    
  \FOR{$i$ {\bfseries in} $X$}
    \STATE   $ Y.append(i<<1)$
  \ENDFOR
  \FOR{$i$ {\bfseries in} reversed $X$}
    \STATE   $ Y.append(i << 1+1)$
  \ENDFOR
  \STATE $X=Y$

  \UNTIL{$len(X)>=N$}
   
    \STATE {\bfseries Output:} $X^N = X[:N]$
\end{algorithmic}
\end{algorithm}

Final representation 
\begin{equation}
\label{eq:Xi}
     X_i^d = \left(\lfloor \frac{i \mod 2^{k-d+1}} {2^{k-d}}\rfloor + \lfloor \frac{i \mod 2^{k-d}}{2^{k-d-1}}\rfloor\right)\mod 2
\end{equation}

can be proved by mathematical induction .

\begin{proof}
\begin{item}
\item Base case: For $k=1$, we know that equation \eqref{eq:Xi} is true.
\item Inductive step: Assume for $k=n-1$ equation \eqref{eq:Xi} is true we deduce it for $k=n$.
If $d = 0$, we can easily verify that equation \eqref{eq:Xi} is true. 

For $d \neq 0$ situations, if $i < 2^{n-1}$, according to algorithm \ref{Br}, $X_i^d (k=n) = X_i^{d-1} (k=n-1) $, so 
\begin{align}
    X_i^d (k=n) &= X_i^{d-1} (k=n-1) \\
    &=\left(\lfloor \frac{i \mod 2^{n-1-(d-1)+1}} {2^{n-1-(d-1)}}\rfloor + \lfloor \frac{i \mod 2^{n-1-(d-1)}}{2^{n-1-(d-1)-1}}\rfloor\right)\mod 2 \\
    &=\left(\lfloor \frac{i \mod 2^{n-d+1}} {2^{n-d}}\rfloor + \lfloor \frac{i \mod 2^{n-d}}{2^{n-d-1}}\rfloor\right)\mod 2.
\end{align}
If $i \geq 2^{n-1}$, $X_i^d (k=n) = X_{2^n-1-i}^d (k=n) = X_{2^n-1-i}^{d-1} (k=n-1) $, so 
\begin{align}
    X_i^d (k=n) &= X_{2^n-1-i}^{d-1} (k=n-1) \\
    &=\left(\lfloor \frac{(2^n-1-i) \mod 2^{n-1-(d-1)+1}} {2^{n-1-(d-1)}}\rfloor + \lfloor \frac{(2^n-1-i) \mod 2^{n-1-(d-1)}}{2^{n-1-(d-1)-1}}\rfloor\right)\mod 2 \\
    &=\left(\lfloor \frac{(2^n-1-i) \mod 2^{n-d+1}} {2^{n-d}}\rfloor + \lfloor \frac{(2^n-1-i) \mod 2^{n-d}}{2^{n-d-1}}\rfloor\right)\mod 2 \\
    &=\left(\lfloor \frac{i \mod 2^{n-d+1}} {2^{n-d}}\rfloor + \lfloor \frac{i \mod 2^{n-d}}{2^{n-d-1}}\rfloor\right)\mod 2.
\end{align}

\end{item}
\end{proof}

\section{Hyper-parameters}
\label{appendix:hyperparams}
\subsection{Long Range Arena}
\label{appendix:lra}

We put hyper-parameters for LRA here. We set the embedding hidden size to 64 and the hidden size for attention to be 128. Dropout rate and weight decay is different for each task.

\begin{table}[!h]
\centering
\small
\caption{Hyper-parameters used for all models we trained on Long Range Arena.}
\vskip 0.15in
\begin{tabular}{@{}lccccccccc@{}}
\toprule
\bf Hyper-parameter          &  \bf   Our Model          \\
\midrule
Batch size & 32           \\
Number of Layers        & 4              \\
Number of Shared Layers & 2              \\
Hidden size             & 64             \\
FFN inner hidden size   & 128            \\
Attention heads         & 4              \\
Attention head size     & 32             \\
Block size            & 16             \\
Dropout                 & 0.1, 0.2, 0.3  \\
Attention Dropout       & 0             \\
Learning Rate Decay     & Cosine         \\
Weight Decay            & 0, 0.0001      \\
Optimizer               & AdamW          \\
Adam $\epsilon$         & 1e-6           \\
Adam $\beta_1$          & 0.9            \\
Adam $\beta_2$          & 0.98           \\
Gradient Clipping       & 0             \\
Prediction Head Pooling & mean           \\
\bottomrule
\end{tabular}
\label{tab:lra_hparams}
\end{table}

\begin{table}[!h]
\centering
\small
\caption{Sparsity settings. $b$ is the block size.}
\vskip 0.15in
\begin{tabular}{@{}lccccccccc@{}}
\toprule
\bf Graph                             &\bf Global  tokens      & \bf Window length &\bf Random tokens      &\bf Blocks (1K)       &\bf Blocks (2K)     &\bf Blocks (4K)      \\
\midrule
Star                            &  $1 \times b $      & 0             & 0                  & 190               & 382             & 766              \\
\tabincell{l}{Ring lattice\\
\quad+ E-R random}                    &  0                  & $3 \times b $ & $5 \times b $      & 498               & 1006            & 2028             \\
\tabincell{l}{Ring lattice\\
\quad+ Star (Longformer) }        &  $1 \times b $      & $3 \times b $ & 0                  & 314               & 634             & 1274             \\
\tabincell{l}{Ring lattice\\
\quad+ Star\\
\quad+ E-R random (BigBird)}  &  $1 \times b $      & $3 \times b $ & $4 \times b $      & 546               & 1119            & 2274             \\
Hypercube                         &  0                  & $\_$          & $\_$               & 448               & 1024            & 2304             \\
\bottomrule
\end{tabular}
\label{tab:blockxxx}
\end{table}

\subsection{CubeBERT$_{128}$ Hyper-parameters}
We present hyper-parameters for pretraining in Table \ref{tab:pretrain_params} and downstream tasks in Table \ref{tab:glue_params}.

\begin{table}[!h]
\centering
\small
\caption{Hyper-parameters used for CubeBERT$_{128}$ pretraining.}
\vskip 0.15in
\begin{tabular}{@{}lccccccccc@{}}
\toprule
\bf Hyperparameter  & \bf Our Model \\
\midrule 
Number of Layers & 24 \\
Hidden size & 1024 \\
FFN inner hidden size & 4096 \\
Attention heads & 16 \\
Attention head size & 64 \\
Dropout & 0.1 \\
Attention Dropout & 0.1 \\
Learning Rate Decay & Linear \\
Weight Decay & 0.01 \\
Optimizer & AdamW \\
Adam $\epsilon$ & 1e-6 \\
Adam $\beta_1$ & 0.9 \\
Adam $\beta_2$ & 0.98 \\
Gradient Clipping & 0.0 \\ \midrule
Batch Size & 4096 \\
Peak Learning Rate &  1e-3 \\
Warmup Proportion &  2\% \\
Max Steps & 240k  \\
\bottomrule
\end{tabular}
\label{tab:pretrain_params}
\end{table}

\begin{table}[!h]
\centering
\small
\caption{Hyper-parameters used for downstream tasks,}
\vskip 0.15in
\begin{tabular}{@{}lccccccccc@{}}
\toprule
\bf Hyper-parameter          & \bf   GLUE &\bf Wikitext103           \\
\midrule
Batch Size        & \{32,16\}        &      8 \\
Learning Rate             & \{2e-5, 5e-5\}          &5e-5 \\
Weight Decay & 0.01& 0 \\
Max Epochs & 5 & 3 \\
Warmup Steps & 50 & 0 \\
\bottomrule
\end{tabular}
\label{tab:glue_params}
\end{table}

\end{document}